\documentclass[twoside]{article}

%\usepackage{aistats2022}
% If your paper is accepted, change the options for the package
% aistats2022 as follows:
%
\usepackage[accepted]{aistats2022}
%
% This option will print headings for the title of your paper and
% headings for the authors names, plus a copyright note at the end of
% the first column of the first page.

\usepackage{amssymb, amsthm}

\usepackage{multirow}
\usepackage{tabularx}

\newcolumntype{Y}{>{\centering\arraybackslash}X}

\theoremstyle{definition}
\newtheorem{example}{Example}[section]

\newtheorem{prop}{Proposition}[section]

\usepackage{mathtools}
\usepackage{bm}
\usepackage[acronym,smallcaps,nowarn,section,nogroupskip,nonumberlist,shortcuts]{glossaries}
\usepackage{url}
\glsdisablehyper
\newacronym{rkhs}{rkhs}{reproducing kernel Hilbert space}
\newacronym{abc}{abc}{approximate Bayesian computation}
\newacronym{mcmc}{mcmc}{Markov chain Monte Carlo}
\newacronym{sigre}{SignatuRE}{ratio estimation with signature kernel logistic regression}
\newacronym{k2re}{K2-RE}{ratio estimation with double kernel logistic regression}
\newacronym{resnet}{ResNet}{residual neural network}
\newacronym{gru}{GRU}{gated recurrent unit}
\newacronym{gru-resnet}{GRU-ResNet}{combination of a GRU model and R\textsc{es}N\textsc{et} classifier}
\newacronym{bresnet}{Bespoke ResNet}{the Bespoke R\textsc{es}N\textsc{et}}
\newacronym{ou}{ou}{Ornstein-Uhlenbeck}
\newacronym{sde}{sde}{stochastic differential equation}
\newacronym{sir}{sir}{sampling importance resampling}
\newacronym{ma2}{ma2}{moving average model of order 2}
\newacronym{gse}{gse}{generalised stochastic epidemic}
\newacronym{swd}{swd}{sliced Wasserstein distances}
\newacronym{wd}{wd}{Wasserstein distances}
\newacronym{lfi}{lfi}{likelihood-free inference}
\newacronym{dre}{dre}{density ratio estimation}
\newacronym{sl}{sl}{synthetic likelihood}
\newacronym{snle}{nle}{neural likelihood estimation}
\newacronym{snpe}{npe}{neural posterior estimation}
\newacronym{mh}{mh}{Metropolis--Hastings}
\newacronym{smcabc}{smc-abc}{sequential Monte Carlo \textsc{abc}}

\newcommand{\bx}{\mathbf{x}}
\newcommand{\by}{\mathbf{y}}

\newcommand{\bs}{\mathbf{s}}
\newcommand{\bw}{\mathbf{w}}

\newcommand{\bK}{\mathbf{K}}
\newcommand{\bU}{\mathbf{U}}
\newcommand{\bD}{\mathbf{D}}

\newcommand{\bth}{\bm{\theta}}
\newcommand{\bTh}{\bm{\Theta}}

\newcommand{\sig}{\mathrm{Sig}}
\newcommand{\sigof}[1]{\mathrm{Sig}(#1)}

\newcommand{\Nys}{Nystr\"{o}m }

\newcommand{\sigre}{S\textsc{ignatu}RE}
\newcommand{\en}{Embedding network}
\newcommand{\pmean}{Posterior mean }
\newcommand{\mi}{Mutual information }

% If you set papersize explicitly, activate the following three lines:

\setlength{\pdfpageheight}{11in}
\setlength{\pdfpagewidth}{8.5in}
% If you use natbib package, activate the following three lines:
\usepackage[round]{natbib}
\bibliographystyle{plainnat}

% If you use BibTeX in apalike style, activate the following line:
%\bibliographystyle{apalike}

\begin{document}

% If your paper is accepted and the title of your paper is very long,
% the style will print as headings an error message. Use the following
% command to supply a shorter title of your paper so that it can be
% used as headings.
%
\runningtitle{Amortised Inference for Expensive Time-series Simulators with Signatured Ratio Estimation}

% If your paper is accepted and the number of authors is large, the
% style will print as headings an error message. Use the following
% command to supply a shorter version of the authors names so that
% they can be used as headings (for example, use only the surnames)
%
%\runningauthor{Surname 1, Surname 2, Surname 3, ...., Surname n}

\twocolumn[

\aistatstitle{Amortised Likelihood-free Inference for Expensive Time-series Simulators with Signatured Ratio Estimation}

\aistatsauthor{ Joel Dyer \And Patrick Cannon \And Sebastian M Schmon }

\aistatsaddress{ 

      University of Oxford\\ 
      {\tt joel.dyer}\\
      {\tt @maths.ox.ac.uk} 

\And  Improbable\\ 
      {\tt patrickcannon}\\
      {\tt @improbable.io} 

\And  Improbable \& Durham University\\ %  
      {\tt sebastianschmon}\\
      {\tt @improbable.io} 
} 

]

\begin{abstract}
Simulation models of complex dynamics in the natural and social sciences commonly lack a tractable likelihood function, rendering traditional likelihood-based statistical inference impossible. Recent advances in machine learning have introduced novel algorithms for estimating otherwise intractable likelihood functions using a likelihood ratio trick based on binary classifiers. Consequently, efficient likelihood approximations can be obtained whenever good probabilistic classifiers can be constructed. We propose a kernel classifier for sequential data using \emph{path signatures} based on the recently introduced signature kernel.  We demonstrate that the representative power of signatures yields a highly performant classifier, even in the crucially important case where sample numbers are low. In such scenarios, our approach can outperform sophisticated neural networks for common posterior inference tasks.
\end{abstract}

\section{INTRODUCTION}
\glsresetall

Simulation models are ubiquitous in modern science, arising in fields ranging from the biological sciences \citep[e.g.][]{Christensen2015} to economics \citep[e.g.][]{economics, sbi4abm}. 
Scientific modelling by describing a generative model directly via computer code instead of a probability distribution is appealing as it allows for complex, non-equilibrium mechanics and the exploration of emergent phenomena.

The task of statistical inference for such models is, however, challenging as most simulators lack tractable likelihood functions, precluding the application of traditional likelihood-based inference techniques. 
Enabling \gls{lfi} in arbitrary simulation models has been a fundamental challenge in computational statistics for some time \citep[e.g.][]{diggle1984monte, kennedy2001bayesian}.
A widely used and well researched paradigm is \gls{abc} \citep{pritchard1999population, beaumont2002approximate}, in which the pertinence of parameter values is determined on the basis of the value of a distance between observed data $\by$ and simulation output $\bx$. While appealing, \gls{abc} typically requires many (hundreds of) thousands of calls to the simulator, which is prohibitive for expensive models. 

More recently, neural methods for estimating the likelihood function \citep{papamakarios2019sequential}, posterior density \citep{Greenberg2019}, or likelihood-to-evidence ratio \citep{thomas2016likelihood, hermans2020likelihood}, have been seen to perform competitive likelihood-free inference with far fewer samples \citep{lueckmann2021benchmarking}. % (should cite something)
However, despite the greater sample efficiency provided by these approaches, their budget requirements can still be too high for very complex models, for example high-dimensional spatio-temporal simulations. 
Indeed, a single call to a simulator can take hours to days for multi-scale models of 3D tumour growth \citep[e.g.][]{jagiella2017parallelization} or multiple thousands of CPU hours for climate models \citep[e.g.][]{danabasoglu2020community}.
For others, high simulation budgets may in principle be attainable but undesirable due to the concomitant financial and environmental costs. 
In addition, as recommended by \cite{hermans2020likelihood}, it might be desirable in practice to train multiple density (ratio) estimators to benefit from ensembling and to account for the variance in the density (ratio) estimate.
These considerations give rise to the question of whether (semi-)automatic approaches exist for capturing important features in high-dimensional time-series data \emph{without} the need for large simulation budgets, and make progress in this area important.

In this paper, we analyse a method based on the \emph{signature kernel} \citep{Kiraly2019, salvi2021} for performing \gls{dre} for expensive time-series simulators/low simulation budgets.
It is well known that kernel methods are useful learning tools in low-training-example regimes, providing rich, ready-made data representations \citep{shawe-taylor_cristianini_2004}. 

Moreover, the signature can extract powerful features from time-series data, acting analogously to moment-generating functions for path-valued random variables. 
To benefit from the advantages of both kernels and the signature, we present an approach to \gls{lfi} based on this signature kernel, demonstrating more accurate inferences than competing density ratio techniques when the simulation budget is limited.

\section{BACKGROUND}

In this section, we provide some background on path signatures, sequential kernels as introduced by \cite{Kiraly2019}, and approaches to likelihood-free inference, with a focus on \gls{dre}.

\subsection{Path signatures}\label{sec:signatures}

Let $\mathcal{S}_n(\mathcal{X})$ be the space of length-$n$ time-series on a topological space $\mathcal{X}$ and $\bx \in \mathcal{S}_n(\mathcal{X})$ be a time-series of points $\left(\bx_1, \bx_2, \dots, \bx_n\right)$ observed at times $0 = t_1 < t_2 < \dots < t_n = T$.
Assume we have a (continuous) positive definite kernel $\kappa\colon \mathcal{X} \times \mathcal{X} \to \mathbb{R}$ yielding a \gls{rkhs} $(\mathcal{H}, \kappa)$ through the canonical feature map $x \mapsto \kappa(x, \cdot) \in \mathcal{H}$. 
We consider paths $h \in C([0, T], \mathcal{H})$ with $h(0) = 0\in\mathcal{H}$ and
\begin{equation*}
    \|h\|_1 = \sup_{\pi(0, T)}\sum_{i=1}^{n-1} \|h(t_{i+1}) - h(t_i)\|_\mathcal{H} < \infty,
\end{equation*}
where the supremum is taken over all finite partitions $\pi(0, T)$ of $[0, T]$, and we may construct such a path from $\bx$ by linearly interpolating the $\kappa(\bx_i, \cdot)$. The \textit{signature}, denoted $\sig$, \citep[see e.g.][]{lyons2014rough} then maps such paths into a series of tensors (by convention, $\mathcal{H}^{\otimes 0} = \mathbb{R}$),
\begin{equation}\label{eq:Sig}
    h \mapsto \sigof{h} := \left(1, S_1(h), S_2(h), \dots\right) \in \prod_{m \geq 0} \mathcal{H}^{\otimes m},
\end{equation}
in which the $m$-th degree component $S_m(h)$ consists of the $m$-th moment tensor of the path integral:
\begin{equation*}
    S_m(h) := \int_0^T \mathrm{d}h^{\otimes m} := \int_0^T \int_0^t \mathrm{d}h^{\otimes (m-1)} \otimes \mathrm{d}h(t),
\end{equation*}
with $\int dh^{\otimes 0} = 1$.

\begin{example}[\citet{Kiraly2019}]
\label{example_exp}

Let $h(t)$ take values in $\mathbb{R}^2$, $h(t) = (h_1(t), h_2(t))$. Then
\begin{equation*}
    S_1(h) = 
    \begin{bmatrix}
        \int_0^T dh_1(t) & \int_0^T dh_2(t) 
    \end{bmatrix}'
\end{equation*}
where $'$ is the transpose, and $S_2(h)$ is
\begin{equation*}
    \begin{bmatrix}
        \int_0^T \int_0^{t_2} dh_1(t_1)dh_1(t_2) & \int_0^T \int_0^{t_2} dh_1(t_1)dh_2(t_2) \\[4pt]
        \int_0^T \int_0^{t_2} dh_2(t_1)dh_1(t_2) & \int_0^T \int_0^{t_2} dh_2(t_1)dh_2(t_2)
    \end{bmatrix}.
\end{equation*}
\end{example}

This general approach allows us to lift time-series data into an \gls{rkhs}, which %. This 
can be particularly useful when the underlying data consists of sequences of non-Euclidean data e.g. graphs or images.

Signatures have several additional favourable properties which make them theoretically appealing: they are a continuous map; they uniquely identify paths, in practice\footnote{The signature is injective up to tree-like equivalence \citep{Hambly_2010}, which is easily remedied with time-augmentation $h(t) \mapsto (t, h(t))$ \citep{levin2016learning}.}; and they are \emph{universal non-linearities} \citep[see e.g.][for a proof]{Kiraly2019}. This latter property means that for any compact set $\mathcal{K}$ of paths of bounded variation, any function $f \in C(\mathcal{K}, \mathbb{R})$ can be approximated uniformly by linear functionals of the signature, i.e. for any $\varepsilon>0$ there exists a linear functional $L$
\begin{equation*}
    \sup_{h\in \mathcal{K}}\Big|f(h) - L\left[\sig(h)\right]\Big| < \varepsilon.
\end{equation*}
In particular, this suggests that we can learn classifiers by linearly regressing the logit on the signature. 

Computing iterated integrals over potentially Hilbert space valued paths might seem infeasible for practical applications. However, a kernel trick \citep{Kiraly2019, salvi2021}, described below, allows for efficient computation of inner products. 

\subsection{The signature kernel}

We can kernelise the feature map \eqref{eq:Sig} with the inner product between $A = \left(a_0, a_1, \dots\right)$ and $B = \left(b_0, b_1, \dots\right)$, $A, B \in \prod_{m \geq 0} \mathcal{H}^{\otimes m}$ as
\begin{gather}\label{eq:inner_prod}
    \left<A, B\right> = \sum_{m = 0}^{\infty} \left<a_m, b_m\right>_{\mathcal{H}^{\otimes m}},\ \  \text{where}\\\nonumber
    \left<u_{1} \otimes \dotsb \otimes u_{m}, w_{1} \otimes \dotsb \otimes w_{m}\right>_{\mathcal{H}^{\otimes m}} = \prod_{k = 1}^{m} \left<u_{k}, w_{k}\right>_{\mathcal{H}}.
\end{gather}
The \emph{signature kernel} over $\kappa$ for $\mathcal{X}$-valued paths $x$, $y$,
\begin{equation}\label{eq:sigkern}
    k\left(x, y\right) = \langle{\sig(\kappa(x, \cdot)), \sig(\kappa(y, \cdot))}\rangle
\end{equation}
yields a positive-definite, universal kernel in which the underlying paths are first lifted from $\mathcal{X}$ into paths evolving in a feature space $\mathcal{H}$ via $\kappa$, before entering the inner product \eqref{eq:inner_prod} \citep{Kiraly2019}. Figure \ref{fig:sigkern} shows a schematic illustrating how the different kernels embed the time-series in the signature kernel. \cite{Kiraly2019} further show that \eqref{eq:sigkern} can be efficiently evaluated using a Horner scheme only relying on evaluations of $\kappa$; additionally, \cite{salvi2021} show that the untruncated signature kernel can be estimated by solving a Goursat partial differential equation. Equipped with the signature kernel, we will be able to learn classifiers by linearly regressing the logit on the signature using kernel logistic regression.

\begin{figure}
\centering
\includegraphics[width=\linewidth]{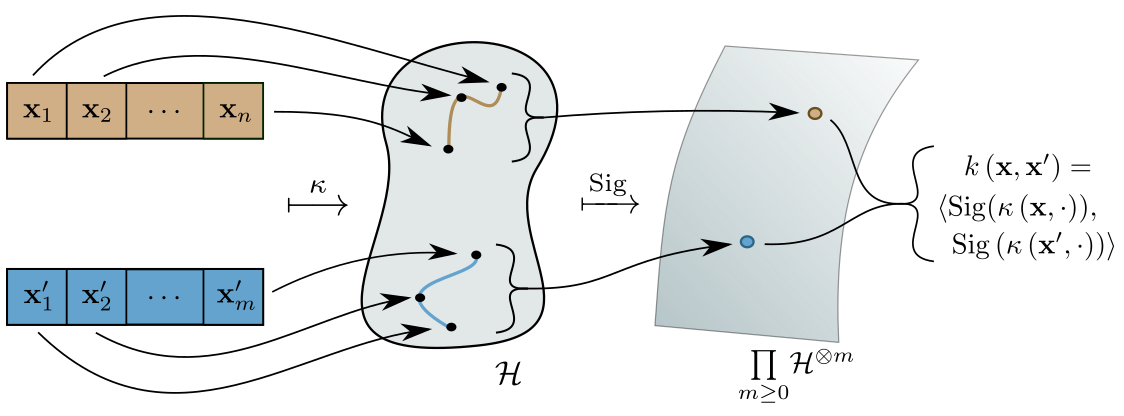}
\vskip -0.05in
\caption{
% Schematic of time-series 
Time-series embedding via the signature kernel $k$ with 
% for a given 
static kernel $\kappa$. The time-series $\mathbf{x}$, $\tilde{\mathbf{x}}$ are lifted to paths in feature space $\mathcal{H}$, via $\kappa$ and some interpolation scheme, before being mapped to a Hilbert space $\prod_{m\geq 0} \mathcal{H}^{\otimes m}$ of tensors via the signature.}\label{fig:sigkern}
\end{figure}

\subsection{Likelihood-free inference}
\glsreset{lfi}

Many approaches to \gls{lfi} have been proposed. 
Among them, a common theme is approximation of the true likelihood function or posterior density.
\Gls{abc} (see \citeauthor{beaumont2019approximate}, \citeyear{beaumont2019approximate}, for a recent review) is a family of methods in which samples from an approximate posterior are derived through forward simulation of the model, $\bx \sim p(\bx \mid \bth)$, in combination with a summary statistic $\bs$ and distance function $D(\bs(\bx), \bs(\by))$ capturing 
a meaningful discrepancy between simulated and real data.
It can be seen as an instance of kernel density estimation since the induced likelihood approximation permits the expression 
\begin{equation*}
    p(\by\mid \bth) \approx \frac{1}{Q} \sum_{i=1}^Q K_{\varepsilon}\left(D(\bs(\bx^{(i)}), \bs(\by))\right)
\end{equation*}
where $\bx^{(i)} \overset{iid}{\sim} p(\bx \mid \bth)$ and $K_{\varepsilon}$, a kernel function with window $\varepsilon$, largely controls the quality of the approximation.
In contrast, a number of methods for \gls{lfi} entail constructing explicit models of the likelihood function or posterior density. An early example is synthetic likelihood \citep{Wood2010}, in which $p\left(\bs\left(\bx\right) \mid \bth\right)$ is modelled as a multivariate Gaussian with mean and covariance estimated from $Q > 1$ simulations at $\bth$. More recent examples include \gls{snle} \citep{papamakarios2019sequential} and \gls{snpe} \citep{Greenberg2019}, in which $p\left(\bs\left(\bx\right) \mid \bth\right)$ and $p\left(\bth \mid \bs\left(\bx\right)\right)$, respectively, are estimated with highly flexible neural conditional density estimators, in particular normalising flows.

\subsection{Amortised density ratio estimation}\label{sec:ADRE}

We briefly recapitulate \gls{dre} for \gls{lfi}, first introduced by \cite{thomas2016likelihood}, to estimate the likelihood-to-evidence ratio 
$$
    r\left(\bx, \bth\right) = \frac{ p\left(\bx \mid \bth\right)}{ p(\bx)}
$$
and thus the parameter posterior $p\left(\bth \mid \bx\right)$ given a prior distribution $p\left(\bth\right)$. 
Most relevant for us is \emph{amortised} \gls{dre} \citep{hermans2020likelihood}.
The core idea is to train a binary classifier to distinguish between positive examples $\left(\bx, \bth \right) \sim p\left(\bx \mid \bth\right) p\left(\bth\right)$ with label $z=1$ and negative examples $\left(\bx, \bth \right) \sim p\left(\bx\right)p\left(\bth\right)$ with label $z=0$. 
The optimal decision boundary is then
\begin{equation}
    d\left(\bx, \bth\right) = \frac{p\left(\bx, \bth\right)}{p\left(\bx, \bth\right) + p\left(\bx\right)p\left(\bth\right)},
\end{equation}
permitting posterior density evaluations as
\begin{equation}
    p\left(\bth \mid \bx\right) = \frac{d\left(\bx, \bth\right)}{1 - d\left(\bx, \bth\right)} p\left(\bth\right) = r\left(\bx, \bth\right)p\left(\bth\right).
\end{equation}
In practice, only an approximation $\hat{r}\left(\bx, \bth\right)$ is available. 
Such approximations can be used for posterior sampling with \gls{mcmc} \citep[e.g.]{pham2014note, thomas2016likelihood, hermans2020likelihood} 
or to perform likelihood ratio tests for frequentist inference \citep{cranmer2015approximating, pmlr-v119-dalmasso20a}. 
As with neural likelihood and posterior estimation, we say that \gls{dre} -- in the form suggested by  \cite{hermans2020likelihood} -- can be \emph{amortised} since $\hat{r}\left(\by, \bth\right)$ can be evaluated for any observation $\by$ and any parameter $\bth$ without retraining the density estimator. 

\subsection{Summary statistics}

For many \gls{lfi} methods, it is typically necessary to reduce high-dimensional data $\bx$ into summary statistics $\bs\left(\bx\right)$. 
A number of approaches for doing so have been explored for \gls{abc}, including semi-automatic \gls{abc} \citep{fearnhead2012constructing}, in which summary statistics $\bs\left(\by\right) = \mathbb{E}\left[\bth \mid \by\right]$ are estimated by performing a vector-valued regression of $\bth^{(i)}$ onto $g\left(\bx^{(i)}\right)$ for training data $\lbrace{\left(\bx^{(i)}, \bth^{(i)}\right)\rbrace}_{i=1}^{Q} \sim p\left(\bx, \bth\right)$ and candidate summary statistics $g\left(\cdot\right)$, and summary-free automatic methods which compute distances on the full dataset without the need to first compute summary statistics \citep{Park2016, Bernton2019, dyer2021approximate}. More recently, \cite{Chen2020} explored the possibility of learning approximately sufficient, mutual information-maximising summary statistics with neural networks.

Some of these methods remain applicable to \gls{snpe} and \gls{dre}. For example, \cite{DinevGutmann} use a convolutional neural network to learn $\bs\left(\bx\right) = \mathbb{E}\left[\bth \mid \bx\right]$, which are then used as predictors in a logistic regression model for \gls{dre}. \Gls{snpe} and \gls{dre} are however particularly interesting in that they permit the concurrent learning of both summary statistics and densities/ratios by augmenting a classifier with an initial \emph{embedding network} \citep{Lueckmann2017}. 
This composite summary-learning/posterior-estimating network is trained end-to-end on the same loss function, producing competitive results \citep{Greenberg2019}. 
However, learning relevant features/summary statistics from neural networks in scenarios where sampling budgets are prohibitively low can be challenging. 

For later reference, we tabulate some of the key works involving learning summary statistics for time-series data in \gls{lfi} settings. We list for each the adopted training scheme, the number of trainable parameters in each case (each involved neural networks), and the assumed simulation budgets in Table \ref{tab:paramcounts_budgets}. 

\begin{table*}[t]
\vskip -0.1in
\caption{Summary of network sizes and simulation budgets for summary statistic learning in previous works.}
\label{tab:paramcounts_budgets}
\begin{center}
\begin{tabular}{c c c c}
\textbf{Authors} & \textbf{Learning method} & \textbf{Network size} & \textbf{Simulation budget} \\
\hline & {} & {} & {} \\
 \cite{jiang2017learning} & \pmean as summary & $\sim 3\times 10^{4}$ & $10^{6}$ \\
 \cite{Lueckmann2017} & \en & $\sim 2\times 10^{3}$ & $5\times 10^{3} - 2.5\times 10^{4}$ 
 \\
 \cite{DinevGutmann} & \pmean as summary & 8,422 & $10^{5}$ \\
 \cite{Greenberg2019} & \en & $\sim 3\times 10^{4}$ & Between $10^{3}$ and $\sim 10^{4}$ \\
 \cite{Chen2020} & \mi maximisation & $\sim 1.5 \times 10^{4}$ & $10^{3} - 10^{4}$ 
 \\
\cite{dyer2021deep} & \en & $\sim 10^{4}$ & $10^{3} - 10^{4}$ 
\end{tabular}
\end{center}
\vspace{-0.15in}
\end{table*}

\section{METHOD}

Our goal is to perform amortised density ratio estimation as described in Section \ref{sec:ADRE} for expensive simulators/low simulation budgets. As we will see in experiments below, learning both summary statistics and a classifier can be challenging in such regimes. To ameliorate this, we propose to build a classifier that leverages the signature kernel, which defines a universal kernel for multivariate and possibly irregularly sampled sequential data. The core idea is that using the predefined features captured by the signature and made available by the signature kernel may yield a more reliable density ratio estimator in low-sample regimes than alternative methods for which summary statistics must be learned.

To construct a probabilistic binary classifier using the signature, we may use the fact that a third kernel $m$ on $\mathcal{S}_n(\mathcal{X}) \times \bTh$ can be composed given two kernels $k : \mathcal{S}_n(\mathcal{X}) \times \mathcal{S}_n(\mathcal{X})  \to \mathbb{R}$ and $l : \bTh \times \bTh \to \mathbb{R}$ as
\begin{equation}\label{eq:ProductKernel}
    m\left(\left(\bx, \bth\right), (\tilde{\bx}, \tilde{\bth})\right) = k\left(\bx, \tilde{\bx}\right) l(\bth, \tilde{\bth}).
\end{equation}
Taking $k$ to be the signature kernel \eqref{eq:sigkern} and $l$ to be a standard universal kernel on $\bTh$, we may construct a kernel-based binary classifier for the purpose of performing \gls{dre} for expensive time-series simulators, in this sense bypassing the need to learn summary statistics in addition to a density (ratio) estimator.

For a regularisation constant $\omega \in \mathbb{R}_{+}$, training a kernel binary classifier with loss $\ell$ amounts to solving the optimisation problem 
\begin{equation}\label{eq:klr}
    \min_{f \in \mathcal{H}_m} \sum_{i=1}^N \ell\left(f(\bx^{(i)}, \bth^{(i)}), z_i\right) + \frac{\omega}{2} \| f \|^2_{\mathcal{H}_m},
\end{equation}
where $\mathcal{H}_m$ is the \gls{rkhs} associated with $m$ and $z_i$ is the class label associated with data $(\bx^{(i)}, \bth^{(i)})$. 
By the representer theorem, the solution to \eqref{eq:klr} is of the form
\begin{equation}\label{eq:SolKLR}
    f\left(\bx, \bth\right) = \sum_{i=1}^{N} c_i k\left(\bx^{(i)}, \bx\right) l\left(\bth^{(i)}, \bth\right)
\end{equation} 
for real coefficients $c_i$. Throughout, we use the logistic loss as $\ell$, since this is known to yield classifiers with well-calibrated probability estimates.
This approach to learning the likelihood-to-evidence ratio is appealing since $m$ is a universal kernel:
\begin{prop}
Let $\mathcal{H}$ be a Hilbert space, $\mathcal{K}$ a compact set of continuous $\mathcal{H}$-valued paths of bounded variation on $\left[0, T\right]$, and assume that $\forall X\in \mathcal{K}$, $X$ has at least one monotone coordinate and $X(0) = \text{constant}$. Also let $k: \mathcal{K} \times \mathcal{K} \to \mathbb{R}$ be the signature kernel and $l$ be a universal kernel on $\bTh$. Then $m$ as defined in Equation \eqref{eq:ProductKernel} is a universal kernel on $\mathcal{K} \times \bTh$.
\end{prop}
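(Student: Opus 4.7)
The plan is to deduce universality of $m$ on $\mathcal{K}\times\bTh$ from universality of $k$ and $l$ on their respective factors, via the Stone--Weierstrass theorem applied to a convenient generating subspace of $\mathcal{H}_m$. Assuming $\bTh$ compact (so that universality of $l$ is meaningful), $\mathcal{K}\times\bTh$ is compact; under the stated hypotheses on $\mathcal{K}$ the signature kernel $k$ is universal by the result of \cite{Kiraly2019}, hence $\mathcal{H}_k$ is sup-norm dense in $C(\mathcal{K})$, and similarly $\mathcal{H}_l$ is sup-norm dense in $C(\bTh)$. Set $\mathcal{E}:=\mathrm{span}\{(\bx,\bth)\mapsto f(\bx)g(\bth):f\in\mathcal{H}_k,\,g\in\mathcal{H}_l\}$; since $m((\bx_0,\bth_0),\cdot)=k(\bx_0,\cdot)\,l(\bth_0,\cdot)$ for every $(\bx_0,\bth_0)$, every canonical feature of $m$ lies in $\mathcal{E}$, so $\mathcal{E}\subseteq\mathcal{H}_m$ and it suffices to show that the sup-norm closure $\mathcal{A}$ of $\mathcal{E}$ in $C(\mathcal{K}\times\bTh)$ is the whole space.

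I would then verify the three Stone--Weierstrass hypotheses for $\mathcal{A}$. Constants are obtained by choosing $f_n\in\mathcal{H}_k$, $g_n\in\mathcal{H}_l$ with $\|f_n-1\|_\infty,\|g_n-1\|_\infty\to 0$, so $f_ng_n\to 1$ uniformly and $1\in\mathcal{A}$. Point separation follows by noting that if $(\bx,\bth)\neq(\tilde\bx,\tilde\bth)$ then at least one coordinate differs, say $\bx\neq\tilde\bx$; pick $f\in\mathcal{H}_k$ with $f(\bx)\neq f(\tilde\bx)$ and $g_n\in\mathcal{H}_l$ with $g_n\to 1$, and for $n$ sufficiently large $f(\bx)g_n(\bth)\neq f(\tilde\bx)g_n(\tilde\bth)$, giving a separating element in $\mathcal{E}\subseteq\mathcal{A}$. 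Closure under multiplication is the subtlest point: for basic elements, $(fg)(f'g')=(ff')(gg')$; although $ff'$ need not belong to $\mathcal{H}_k$, universality provides $h_n\in\mathcal{H}_k$ with $h_n\to ff'$ uniformly and $h'_n\in\mathcal{H}_l$ with $h'_n\to gg'$ uniformly, and since all functions involved are bounded on the compact product domain, $h_nh'_n\in\mathcal{E}$ converges uniformly to $(ff')(gg')$, placing the product in $\mathcal{A}$. Bilinearity and sup-norm continuity of multiplication then extend this to give $\mathcal{A}\cdot\mathcal{A}\subseteq\mathcal{A}$.

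With the three hypotheses verified, Stone--Weierstrass yields $\mathcal{A}=C(\mathcal{K}\times\bTh)$, which together with $\mathcal{E}\subseteq\mathcal{H}_m$ gives sup-norm density of $\mathcal{H}_m$ in $C(\mathcal{K}\times\bTh)$ and hence universality of $m$. The main obstacle is the algebra-closure step: reproducing kernel Hilbert spaces are not generally closed under pointwise products, so one cannot keep the multiplied factors inside $\mathcal{H}_k$ and $\mathcal{H}_l$ while computing $(fg)(f'g')$; the workaround is to leave these spaces upon multiplying into $C(\mathcal{K})\cdot C(\bTh)$ and then use universality of $k$ and $l$ to re-approximate the resulting product by elements of $\mathcal{E}$ in sup norm, at which point closure under uniform limits puts it back inside $\mathcal{A}$.
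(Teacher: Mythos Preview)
Your argument is sound and amounts to proving from scratch, via Stone--Weierstrass, that the tensor product of two universal kernels on compact spaces is universal on the product space. The paper's proof is much terser: it invokes \citet[Theorem~1]{Kiraly2019} for the universality of the signature kernel $k$ on $\mathcal{K}$ (as you do), and then simply cites \citet[Lemma~5.2]{generalizing2011} for the fact that the product of universal kernels is universal, rather than re-deriving it. So the two proofs share the same two-step structure; the difference is only that you unpack the second step explicitly. Your version is more self-contained and shows what is actually going on, while the paper's is shorter and delegates the work to the literature.

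One logical slip worth fixing: your justification of $\mathcal{E}\subseteq\mathcal{H}_m$ is backwards. From $m((\bx_0,\bth_0),\cdot)=k(\bx_0,\cdot)\,l(\bth_0,\cdot)$ you obtain only that the canonical features of $m$ lie in $\mathcal{E}$, i.e.\ $\mathrm{span}\{m((\bx_0,\bth_0),\cdot)\}\subseteq\mathcal{E}$, which is the wrong direction. The claim $\mathcal{E}\subseteq\mathcal{H}_m$ is nonetheless true, but it follows from the Aronszajn result that the RKHS of the tensor product kernel on a product space is the Hilbert tensor product $\mathcal{H}_k\otimes\mathcal{H}_l$, under which simple tensors $f\otimes g$ act as $(\bx,\bth)\mapsto f(\bx)g(\bth)$; the algebraic tensor product is then exactly your $\mathcal{E}$, and it sits inside $\mathcal{H}_m$. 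Alternatively, you can sidestep the issue entirely by running Stone--Weierstrass on the closure of $\mathcal{E}_0:=\mathrm{span}\{m((\bx_0,\bth_0),\cdot)\}$, which is trivially contained in $\mathcal{H}_m$; the three hypotheses still go through because the spans of $\{k(\bx_0,\cdot)\}$ and $\{l(\bth_0,\cdot)\}$ are already sup-norm dense in $C(\mathcal{K})$ and $C(\bTh)$ by universality. With either fix, the rest of your verification is correct as written.
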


\begin{proof}
From \citet[][Theorem 1]{Kiraly2019}, the signature kernel is a universal kernel on $\mathcal{K}$. Then the assumed universality of $l$ and \citet[][Lemma 5.2]{generalizing2011} give the desired result.
\end{proof}

The universality of $m$ then enables us to learn an estimate of the density ratio arbitrarily well.

\subsection{Low-rank approximation}

Computing the signature kernel for all pairs $\left(\bx^{(i)}, \bx^{(j)}\right)$ in the simulated dataset can be expensive if the $\bx^{(i)}$ are long and/or are high-dimensional. We therefore use the kernel $m$ defined in \eqref{eq:ProductKernel} -- with $k$ the signature kernel and $l : \bTh \times \bTh \to \mathbb{R}$ an anisotropic Gaussian radial basis function (RBF) kernel -- and the \Nys approximation to first find a representation of each pair $\left(\bx, \bth\right)$ before feeding this low-dimensional approximation into the logistic regression model. For a given kernel $m$, the \Nys method \citep{williams2001using, NIPS2012_621bf66d} provides a low-dimensional approximation $\hat{\phi}$ of the high- or potentially infinite-dimensional feature map $\phi(v) := m\left(v, \cdot\right)$ as follows: assume the kernel $m$ is of rank $q$ such that for any data $\lbrace{v^{(i)}\rbrace}_{i=1}^{N}$ we may write the corresponding Gram matrix $\bK$ as
\begin{equation}
    \bK = \bU \bD \bU',
\end{equation}
where $\bU \in \mathbb{R}^{N \times q}$ is the matrix of eigenvectors and $\bD = \mathrm{diag}\left(\lambda_1, \dots, \lambda_q\right) \in \mathbb{R}^{q\times q}$ is the diagonal matrix consisting of eigenvalues $\lambda_i$. Then denoting the first $q$ rows of $\bU$ as $\bU_q$, we may find an approximate feature representation of $v$ under $m$ as \citet{NIPS2012_621bf66d}
\begin{equation*}\label{eq:nys_features}
    \hat{\phi} \left(v\right) =  \bD^{ -\frac{1}{2} } \bU_q' \left[m\left(v, v^{(1)}\right), \dots, m\left(v, v^{(q)}\right)\right]'.
\end{equation*}
Using these approximate feature representations obtained with the \Nys approximation, we then construct a linear logistic regression model by solving the following optimisation problem:
\begin{equation}
    \min_{\bw \in \mathbb{R}^{q}} \sum_{i=1}^N \ell\left(\bw' \hat{\phi}\left(v^{(i)}\right), z_i\right) + \frac{\omega}{2} \| \bw \|^2_{2},
\end{equation}
where $\ell$ is the logistic loss. We omit the use of an intercept in the linear logistic regression optimisation problem above for simplicity, but include it in practice.

Throughout the rest of this paper, we term this approach to performing ratio estimation with the signature kernel and logistic regression \sigre{}.

\section{EXPERIMENTS}\label{sec:Exp}

In this section, we present experiments on the relative performance of the \sigre{} method against possible alternatives for \gls{dre} in likelihood-free inference contexts. 
For each task, we compare the quality of the posterior estimated with \sigre{} against the posteriors estimated with three alternatives:

\begin{enumerate}
    \item a neural network consisting of a gated-recurrent unit (GRU) and residual network (R\textsc{es}N\textsc{et}), jointly termed GRU-R\textsc{es}N\textsc{et}. The GRU has trainable parameters $\varphi$ and consists of two stacked GRU layers of size 32. The GRU and R\textsc{es}N\textsc{et} are trained concurrently on the cross-entropy loss, so that the GRU learns a low-dimensional summary $\bs_{\varphi}\left(\bx\right)$ as the R\textsc{es}N\textsc{et} learns the density ratio;
    \item a R\textsc{es}N\textsc{et} which instead consumes predefined, hand-crafted summary statistics $\tilde{\bs}(\bx)$ that are tailored to the inference task and known to be informative of the parameters to be inferred for tractable simulation models, or that are commonly used elsewhere in the literature when the simulation model is not tractable. Such an approach should be considered a gold standard that is not generally available for complex, opaque simulation models whose structure cannot be exploited to derive suitable summary statistics. We refer to this method as the B\textsc{espoke} R\text{es}N\textsc{et};
    \item \gls{k2re}, a modification of K2-ABC \citep{Park2016} that we propose as an alternative kernel-based method for \gls{dre}. The setup is identical to \sigre{} with the exception that, instead of the signature kernel, we use
    \begin{equation}
        k\left(\bx, \tilde{\bx}\right) = \exp\left(-\frac{\widehat{\textsc{MMD}}^2\left(\mu_{\bx}, \mu_{\tilde{\bx}}\right)}{\epsilon}\right)
    \end{equation}
    as the positive definite kernel on $\bx$, where $\mu_{\bx}$ is the empirical measure consisting of the $n_{\bx}$ points comprising $\bx$ and $\widehat{\text{MMD}}^2\left(\mu_{\bx}, \mu_{\tilde{\bx}}\right)$ 
    is an unbiased estimate of the kernel maximum mean discrepancy between $\mu_{\bx}$ and $\mu_{\tilde{\bx}}$ for an appropriate kernel $\chi$ \citep[see Section 3.][]{Park2016}. We use a Gaussian RBF and the median heuristic \citep[Section 4.][]{Park2016} for $\chi$. We include further details on this method in the supplement.
\end{enumerate}

For the static kernel $\kappa$ in the signature kernel (see Section \ref{sec:signatures}), we use a Gaussian RBF kernel with scale parameter chosen as $\mathrm{median}\{\| \by_i - \by_j  \|^2_{i,j}\}$, where $\by = \left(\by_1, \dots, \by_n\right)$ is the observation\footnote{Tuning this scale parameter may be expensive for, and thus is a limitation of, our particular implementation. However, cheaper implementations exist for the signature kernel (see e.g. \url{https://github.com/tgcsaba/KSig}).}. For the kernel $l : \bTh \times \bTh \to \mathbb{R}$, we use an anisotropic Gaussian RBF kernel. To tune the length scale hyperparameters for $l$, the regularisation parameter $\omega$, and the $\epsilon$ parameter for \gls{k2re}, we use Bayesian optimisation 
and 5-fold cross-validation (see the Supplementary Material for further details). To train the logistic regression models, we use the \textsc{l-bfgs} algorithm \citep{LBFGS} with a maximum number of 500 iterations.

To construct the set of negative examples $\left(\bx, \bth \right) \sim p\left(\bx\right)p\left(\bth\right)$ for \sigre{} and \gls{k2re}, we choose a proportion $K > 0$ of the $\bx^{(i)}$ and pair them with some $\bth^{(j)}$, $j \neq i$. $K > 1$ may also be chosen, in which case some $\bx^{(i)}$ will appear multiple times in the set of negative examples. Unless stated otherwise, we take $K=1$ and $q=B_{\rm min} (K+1)$ in the \Nys approximation for both \sigre{} and \gls{k2re}, where $B_{\rm min}$ is the smallest simulation budget considered in the experiment\footnote{This value for $q$ is chosen since it is the largest value that can be consistently applied across the range of simulation budgets considered in a given experiment.}.

\subsection{Computational expense}

Evaluation of the signature kernel has complexity linear in the dimension of the time-series and linear (resp. quadratic) in the length of the time-series when evaluated on CPU (resp. GPU). 
Empirically, we observe \sigre{} to entail a comparable computational cost to the \textsc{GRU-ResNet}, the former typically requiring 3-5 CPU hours for training and inference and the latter typically requiring 1-2 CPU hours. For the simulation models for which we suppose our approach may be most helpful -- those with significantly limited simulation budgets -- we expect this to amount to a negligible difference: 1-4 additional CPU hours would allow for few or no additional simulations to be generated.

\begin{figure}
\centering
\includegraphics[width=0.42\textwidth, trim=5 10 0 5, clip=True]{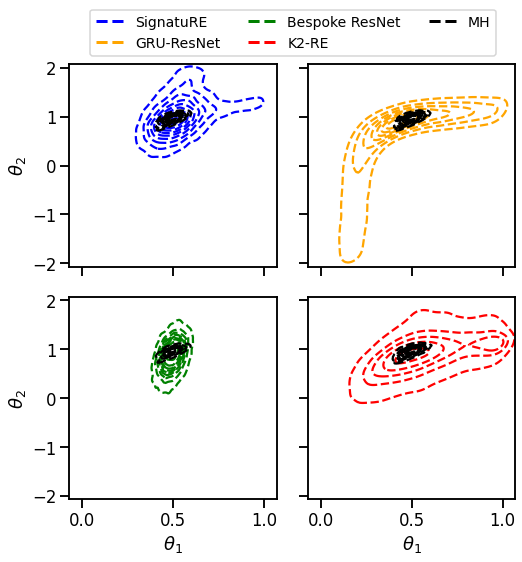}
\caption{(\textbf{Ornstein-Uhlenbeck}) Posteriors obtained with \sigre{} (blue, top left), \gls{gru-resnet} (orange, top right), \gls{bresnet} (green, bottom left), and \gls{k2re} (red, bottom right) for a budget of 500 simulations and the approximate ground truth posterior obtained using the true likelihood function and Metropolis-Hastings (black).}\label{fig:ou_posteriors}
\end{figure}

\subsection{Ornstein-Uhlenbeck process}

The \gls{ou} process \citep{UhlenbeckG.E.1930Otto} is a prototypical Gauss--Markov \gls{sde} model. We discretise the \gls{sde} such that the data $\bx = \left(\bx_0, \bx_1, \dots, \bx_T\right), \bx_i\in \mathbb{R}$ is generated according to
\begin{equation*}
    \bx_{i} = \theta_1 \exp{(\theta_2)} \Delta t + (1 - \theta_1 \Delta t) \bx_{i-1} + \frac{\epsilon_{i}}{2},
\end{equation*}
where $\Delta t = 0.2$ is the time discretisation, $\bth = \left(\theta_1, \theta_2\right)$ are the model parameters to be inferred, $T=50$, and $\epsilon_i \sim \mathcal{N}\left(0, \Delta t\right)$. We generate $\bx^{*} \sim p\left(\mathbf{x} \mid \bth^{*}\right)$ with $\bth^{*} = \left(0.5, 1\right)$ and consider the task of estimating $p\left(\bth \mid \bx^{*}\right)$ given priors $\theta_1 \sim \mathcal{U}\left(0, 1\right)$ and $\theta_2 \sim \mathcal{U}\left(-2,2\right)$.

We compare \sigre{} against the alternative \gls{dre} methods described in Section \ref{sec:Exp}. As $\tilde{\bs}\left(\bx\right)$, we use the intercept and slope of a linear regression of $\bx_t$ vs. $\bx_{t-1}$ as estimated with least squares (i.e. the maximum likelihood estimate) and the mean value of $\bx$. These estimate $\theta_1 \exp{(\theta_2)} \Delta t$, $1 - \theta_1 \Delta t$, and $\exp\left(\theta_2\right)$, respectively, and are thus informative summary statistics for $\bth$. For \gls{gru-resnet}, we apply a linear layer of size 3 after the GRU in order to match the dimension of $\tilde{\bs}$, resulting in a GRU with 9,795 trainable parameters.

In Figure \ref{fig:ou_posteriors} we show contour plots obtained by pooling the samples obtained from each ratio estimation method with a simulation budget of 500 simulations across 20 different seeds. Samples from the approximate ground truth posterior, obtained with \gls{mh} (see Appendix for details) and the true likelihood function, are shown with black contour lines throughout. Additionally, we show in Figure \ref{fig:ou_swds} the \gls{wd} between the estimated posteriors and the approximate ground truth posterior\footnote{We refrain from using the maximum mean discrepancy due to previous reports of sensitivity to hyperparameter settings \citep[see e.g.][]{lueckmann2021benchmarking}.}, and in Figure \ref{fig:ou_mds} the distances between the means of the estimated and approximate ground truth posteriors, for each ratio estimation method.

From this, we observe that \gls{gru-resnet} (orange, top right of Figure \ref{fig:ou_posteriors}) failed to learn both informative summary statistics and an accurate ratio estimator with a low simulation budget, despite the simplicity of the model. In contrast, an identical residual network used for \gls{bresnet} (green, bottom left of Figure \ref{fig:ou_posteriors}) was able to learn a good estimate of the density ratio, even from such a limited simulation budget and with a summary statistic vector of identical size, but with the key difference that the summary statistics were predefined and designed to be informative of the parameter values being inferred. 

This may be seen as an ablation study and suggests that the additional problem of learning summary statistics is the primary contributing factor to the relatively poor performance of \gls{gru-resnet}.

\begin{figure}
\centering
\includegraphics[width=0.44\textwidth, trim=0 0 0 0, clip=True]{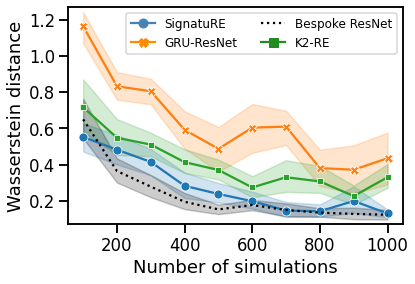}
\caption{(\textbf{Ornstein-Uhlenbeck}) Wasserstein distances (mean + 95\% confidence intervals) between posteriors obtained with each density ratio estimation method and the approximate ground truth posterior.
}\label{fig:ou_swds}
\end{figure}

We also observe that, of the methods that do not use hand-crafted summary statistics, \sigre{} tends to exhibit superior performance. This is apparent from the posterior plots in Figure \ref{fig:ou_posteriors}, and from Figure \ref{fig:ou_swds} in which \sigre{} consistently generates smaller \gls{wd}s than \gls{gru-resnet} and \gls{k2re} and lags only slightly behind \gls{bresnet}. 

From Figure \ref{fig:ou_mds} we see that \sigre{} tends to generate a significantly better parameter point estimate than \gls{gru-resnet} and is additionally a slight improvement on \gls{k2re} in this respect. 
The latter indicates that the success of \sigre{} in the low-simulation-budget regime is not only attributable to the expressive, preexisting feature representations available with \emph{general} kernel methods, but also to the fact that the \emph{sequentialisation} of the kernel employed in \sigre{} captures important information on the time-dependence of the data whereas in \gls{k2re} the data is treated as \emph{iid}.

\begin{figure}
\vspace{.05in}
\centering
\includegraphics[width=0.45\textwidth]{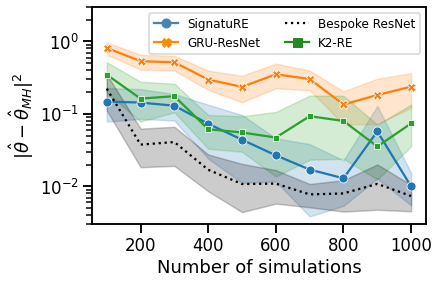}
\caption{(\textbf{Ornstein-Uhlenbeck}) Euclidean distances (mean + 95\% confidence intervals) between posterior means obtained with each density ratio estimation method and the approximate ground truth posterior.
}\label{fig:ou_mds}
\end{figure}

\subsection{Moving average model}

We next consider a simple \gls{ma2}, for which the data-generating process given parameters $\bth = \left(\theta_1, \theta_2\right)$ is
\begin{equation}
    \bx_{t} = \epsilon_{t} + \theta_1 \epsilon_{t-1} + \theta_2 \epsilon_{t-2},\ \ \epsilon_{t} \sim \mathcal{N}\left(0, 1\right).
\end{equation}
We generate $\bx^{*} \sim p\left(\mathbf{x} \mid \bth^{*}\right)$ with $\bth^{*} = \left(0.6, 0.2\right)$ and consider the task of estimating $p\left(\bth \mid \bx^{*}\right)$ given a uniform prior over the triangle given by $\theta_1 + \theta_2 > -1$, $\theta_1 - \theta_2 < 1$, and $\theta_2 < 1$. Such a prior ensures that the model parameters are identifiable \citep{Marin2012}. Here, $\bx$ and $\bx^{*}$ are taken to be of length 50.

As $\tilde{\bs}(\bx)$ we use the variance of the observed stream and the autocorrelations for lags 1 and 2. These give estimates of
\begin{align*}
    \text{Var}\left(\mathbf{X}\right) = 1\, +&\ \theta_1^2 + \theta_2^2,\ \ \
    \rho_{1} = \frac{\theta_1 + \theta_1 \theta_2}{1 + \theta_1^2 + \theta_2^2},\\
    \text{and   }& \rho_2 = \frac{\theta_2}{1 + \theta_1^2 + \theta_2^2},
\end{align*}
respectively, and are thus informative about $\bth$. We once again apply a single linear layer of size 3 following the GRU in \gls{gru-resnet} to match the dimensions of the summary statistics in \gls{bresnet}.

We show in Figure \ref{fig:ma2_swds} the \gls{wd}s between samples from the posteriors estimated with each density ratio estimation method and the approximate ground truth posterior obtained with Metropolis-Hastings \gls{mcmc}. In Figure \ref{fig:ma2_mds}, we show the Euclidean distances between the means of the posteriors estimated with the different density ratio estimators and the approximate ground truth posterior. In this experiment, we once more see that \gls{bresnet} significantly outperforms \gls{gru-resnet} in estimating the shape of the posterior distribution, despite the fact that they use identical residual networks to perform the density ratio estimation and that $\text{dim}\left(\tilde{\bs}\right) = \text{dim}\left(\bs_{\varphi}\right)$. This again suggests that the complex task of learning summary statistics in addition to learning the density ratio is the source of the difference in their performance.

We further observe that \sigre{} outperforms \gls{gru-resnet} both in terms of the \gls{wd} and distances between the estimated and approximate ground truth posterior means for simulation budgets of less than 500. For simulation budgets of 600-1000, \sigre{} and \gls{gru-resnet} display comparable performance according to the \gls{wd}s, while \sigre{} continues to obtain superior posterior mean estimates. Interestingly, \sigre{} additionally yields better estimates of the posterior mean than \gls{bresnet}, despite the fact that this density estimator has a considerable advantage through the use of hand-crafted summary statistics that are known to be informative of the parameters being inferred. 
As in the previous experiment, the success of \sigre{} appears to be attributable not only to the general properties of kernel methods that make them appealing in low-sample regimes -- their ready-made, expressive feature spaces -- but also to the fact that the signature accounts for the ordering of observations. We believe this explains the gap in performance between \gls{k2re} and \sigre{} despite the former also being a kernel method.

\begin{figure}
\vspace{.1in}
\centering
\includegraphics[width=0.43\textwidth, trim=0 0 0 0, clip=True]{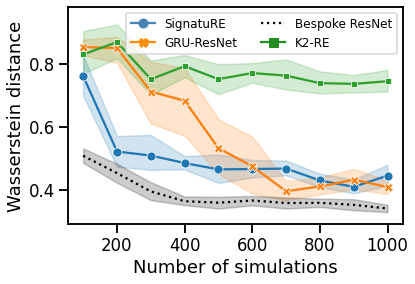}
\vspace{.04in}
\caption{(\textbf{MA(2)}) Wasserstein distances (mean + 95\% confidence intervals) between posteriors obtained with each density ratio estimation method and the approximate ground truth posterior.
}\label{fig:ma2_swds}
\end{figure}

\subsection{Complex, intractable example: partially-observed stochastic epidemic}

Finally, we consider a more complex example with an intractable posterior distribution. The model we consider here is a \gls{gse} model \citep{Kypraios2007EfficientBI}, which simulates the spread of an infection through a fixed population of $N$ individuals. Individuals in the system are initially \emph{susceptible}, can become \emph{infected}, and subsequently enter a \emph{recovered} state in which they are no longer susceptible to reinfection. In a time interval $\delta t$, infections, recoveries, and an absence of activity occur with probabilities
\begin{align*}
    &{} P_{I} := P\left[(\delta X_{t}, \delta Y_{t}) = (-1, 1) \mid \sigma_t \right] = \beta X_t Y_t \delta t + o(\delta t),\\
    &{} P_{R} := P\left[(\delta X_{t}, \delta Y_t) = (0, -1) \mid \sigma_t \right] = \gamma Y_t \delta t + o(\delta t),\\
    &{} P\left[(\delta X_{t}, \delta Y_t) = (0, 0) \mid \sigma_t \right] = 1 - (P_I + P_R),
\end{align*}
respectively, where $X_t$ and $Y_t$ are the number of susceptible and infected agents at time $t \in [0, T]$, respectively, $\sigma_t$ is a sigma-algebra generated by the process up until time $t$, and $\bth = \left(\beta, \gamma\right)$ is the model parameter.

We simulate the model using the Gillespie algorithm \citep{Gillespie} and observe the series $\bx = \left(X_{i\Delta t}, Y_{i\Delta t}\right)_{i = 0}^{D} \in \mathcal{S}_{D+1}\left(\mathbb{R}^2\right)$ at regular time intervals of length $\Delta t = 0.5$ with $D=100$.
We consider the task of estimating $p\left( \bth \mid \bx^*\right)$ for $\bx^{*} \sim p\left(\bx \mid \bth^{*}\right)$, $\bth^{*} = \left(10^{-2}, 10^{-1}\right)$, and priors $\beta \sim \Gamma\left(0.1, 2\right)$ and $\gamma \sim \Gamma\left(0.2, 0.5\right)$. 
To sample from the posterior in this case, we use a \gls{sir} scheme\footnote{Due to the complicated target distribution, the Metropolis--Hastings scheme adopted in the rest of this paper performed poorly.}: we sample $\mathcal{T} = \lbrace{\bth_m \rbrace}_{m=1}^{M}$ from the prior, before resampling $\lbrace{\tilde{\bth}_m\rbrace}_{m=1}^{\tilde{M}}$ from $\mathcal{T}$, where each sample in $\mathcal{T}$ has weight proportional to the density ratio estimated by the classifiers. We take $M = 5\times 10^4$ and $\tilde{M} = 10^3$.

\begin{figure}
\vspace{.09in}
\centering
\includegraphics[width=0.46\textwidth, trim=0 5 0 0, clip=True]{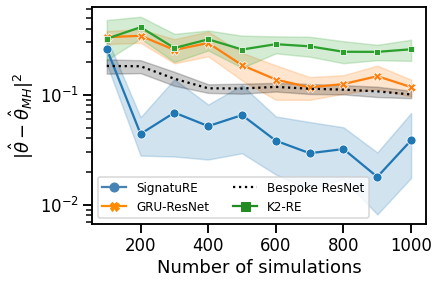}
\vspace{.06in}
\caption{(\textbf{MA(2)}) Euclidean distances (mean + 95\% confidence intervals) between posterior means obtained with each density ratio estimation method and the approximate ground truth posterior mean.% obtained using the true likelihood function and Metropolis-Hastings \gls{mcmc}.
}\label{fig:ma2_mds}
\end{figure}

\begin{table*}[t]
\caption{Median Wasserstein distance from \gls{smcabc} posterior for the partially-observed epidemic model (from 10 seeds). Smaller values are better. {\bf Bold} and {\it italics} indicate best and second-best, respectively, of the methods that do not use pre-defined summary statistics.}\label{tab:wass}
\begin{center}
\begin{tabular}{c c c c c c}
\textbf{Method} & \multicolumn{5}{c}{\textbf{Simulation budget}}\\
 {} & \textbf{50}{} & \textbf{100} & \textbf{200} & \textbf{500} & \textbf{1000} \\
\hline & {}  & {} & {} & {} & {} \\
 \textsc{GRU-ResNet}     & 0.434 & 0.425 & 0.355 & 0.273 & \emph{0.090} \\
 K2-RE          & \emph{0.417} & 0.432 & 0.407 & 0.454 & 0.431 \\
 K2-RE-5        & 0.440 & 0.427 & 0.374 & \emph{0.206} & 0.255 \\
 \textsc{SignatuRE}      & 0.430 & \emph{0.411} & \emph{0.351} & 0.513 & 0.321 \\
 \textsc{SignatuRE-5}    & {\bf 0.241} & {\bf 0.333} & {\bf 0.176} & {\bf 0.133} & {\bf 0.083} \\\hline
 \textsc{Bespoke ResNet}     & 0.379 & 0.222 & 0.146 & 0.104 & 0.092 \\
\end{tabular}
\end{center}
\end{table*}

\glsreset{smcabc}
In this instance, the ground truth posterior distribution is not available for comparison. For this reason, we assess the quality of inferences by comparing against the posterior obtained from \gls{smcabc} \citep{beaumont2009adaptive} in which we use the Euclidean distance between time-series 
\begin{equation}
    \sum_{i=0}^{D} \| \bx_i - \bx^{*}_i \|^2_2
\end{equation}
as the distance measure with $10^7$ simulations, a Gaussian kernel, and $\epsilon$ decay factor equal to 0.8. We again compare \sigre{} with \gls{gru-resnet}, \gls{bresnet}, and \gls{k2re}. For \gls{bresnet}, we use the mean of each series, log variance of each series, autocorrelation coefficients for lags 1 and 2 of each series, and the cross-correlation coefficient between the two series as $\tilde{\bs}(\bx)$, which are common summary statistics for stochastic kinetic models \citep{papamakarios2019sequential, Greenberg2019}. For \gls{gru-resnet}, we apply a single linear layer of size 9 to match the dimensions of $\tilde{\bs}(\bx)$.

We present the median Wasserstein distance between the estimated posteriors and the approximate ground truth posterior from \gls{smcabc} in Table \ref{tab:wass}, in which suffix ``-5'' indicates that $K=5$ for kernel methods (otherwise $K=1$ is used as before). We take $q = B_{\rm min}(K+1)$ components in the Nystr\"{o}m approximation for both \sigre{} and K2-RE, where $B_{\rm min} = 50$ is the minimum simulation budget in this experiment. Median values are obtained by repeating the inference procedure over 10 different random seeds using the same pseudo-observed data. Of the methods that must learn summary statistics (i.e. all but \textsc{Bespoke ResNet}), our methods are either best ({\bf bold}) or second-best (\emph{italics}) for all budgets, but the improvement in performance over \gls{gru-resnet} at a budget of 1000 simulations is minor. While this demonstrates that the range of applicability of \sigre{} may be limited, it nonetheless also demonstrates that \sigre{} can be preferable under extreme restrictions on the simulation budget, as can be the case in many real-world contexts.

\section{DISCUSSION}

This paper discusses the use of signature transforms as automatic and effective feature extractors for likelihood (ratio) estimation. 
Our method, based on universal kernels for sequential data and termed \sigre{}, delivers competitive performance even when sample numbers are very low. 
Indeed, our simulation studies suggest that using signatures as features improves upon a time-series specialised \textsc{GRU-ResNet} or kernels based on maximum mean discrepancies in low-simulation-budget scenarios. We propose that this can be understood in the following way: while \textsc{GRU-ResNet} must learn adequate summary statistics -- which can be difficult for low simulation budgets -- and K2-RE uses a kernel maximum mean discrepancy estimator that treats the points in the time-series as exchangeable, destroying important dependencies, \sigre{} uses expressive ready-made geometric features for paths which take the ordering of points into account. 
In our experiments, \sigre{} was only consistently outperformed by a classifier that used bespoke hand-crafted summary statistics which were constructed by carefully inspecting the model structure. For real, complex simulators, such an approach is infeasible, making the proposed method appealing. 

\subsubsection*{Acknowledgements}
The authors thank Harald Oberhauser and the anonymous reviewers for their helpful feedback. JD is supported by the EPSRC Centre For Doctoral Training in Industrially Focused Mathematical Modelling (EP/L015803/1) in collaboration with Improbable.

\bibliography{references}

%%%%%%%%%%%%%%%%%%%%%%%%%%%%%%%%%%%
%%%%%% SUPPLEMENT (OPTIONAL) %%%%%%
%%%%%%%%%%%%%%%%%%%%%%%%%%%%%%%%%%%

\clearpage
\appendix

\thispagestyle{empty}

\onecolumn \makesupplementtitle

\section{EXPERIMENT DETAILS}

\subsection{Further details on K2-RE}

To test the hypothesis that the signature kernel is responsible for the improved performance seen in the experiments presented in the main text, we construct and compare an alternative kernel-based classifier to compare against. The design of this classifier is chosen to match exactly that of \sigre{}, with an important change: the kernel $k$ is no longer taken to be the signature kernel, but instead a kernel based on the K2-ABC \citep{Park2016}:
\begin{equation}
    k\left(\bx, \tilde{\bx}\right) = \exp\left(-\frac{\widehat{\textsc{MMD}}^2\left(\mu_{\bx}, \mu_{\tilde{\bx}}\right)}{\epsilon}\right),
\end{equation}
where
\begin{multline}
    \widehat{\textsc{MMD}}^2\left(\mu_{\bx}, \mu_{\tilde{\bx}}\right) = -\frac{2}{n_{\bx} n_{\tilde{\bx}}} \sum_{i=1}^{n_{\bx}} \sum_{j=1}^{n_{\tilde{\bx}}} \chi\left(\bx_{i}, \tilde{\bx}_{j}\right)
    + \frac{1}{n_{\bx}\left(n_{\bx} - 1\right)} \sum_{\substack{i=1}}^{n_{\bx}} \sum_{\substack{j\neq i}} \chi\left(\bx_{i}, \bx_{j}\right)\\
    + \frac{1}{n_{\tilde{\bx}}\left(n_{\tilde{\bx}} - 1\right)} \sum_{\substack{i=1}}^{n_{\tilde{\bx}}} \sum_{\substack{j\neq i}} \chi\left(\tilde{\bx}_{i}, \tilde{\bx}_{j}\right)
\end{multline}
is an unbiased estimate of the kernel maximum mean discrepancy between measures $\mu_{\bx}$ and $\mu_{\tilde{\bx}}$ for an appropriate kernel $\chi$ \citep[see Section 3.][]{Park2016}. We use a Gaussian RBF and the median heuristic \citep[Section 4.][]{Park2016} for $\chi$. 

Comparing against an alternative kernel classifier that does not account for the ordering of the points $\bx_i$ in $\bx$ allows us to test the hypothesis that it is specifically the \emph{signature} kernel, and not just kernel methods in general, that allow us to achieve the improved performance at low simulation budgets.

\subsection{Tuning kernel parameters}

To optimise the kernel parameters for \textsc{SignatuRE} and K2-RE, we use 5-fold cross-validation and Bayesian optimisation via a tree Parzen estimator with the following priors:
\begin{enumerate}
    \item a log-uniform prior with bounds $\left[\log{10^{-3}}, \log{10^{3}}\right]$ for all lengthscale parameters;
    \item a log-uniform prior with bounds $\left[\log{10^{-5}}, \log{10^{4}}\right]$ for the regularisation parameters.
\end{enumerate}
For this purpose, we make use of the \texttt{hyperopt} python package (Bergstra et al., 2013).

\subsection{Training the ResNet models}

For both \textsc{GRU-ResNet} and B\textsc{espoke} \textsc{ResNet}, the \textsc{ResNet} consists of two hidden layers of 50 units with ReLU activations, which has previously been seen to produce state-of-the-art performance in likelihood-free density ratio estimation tasks \citep{Durkan2020, Lueckmann2017}. We follow \citet{Durkan2020} and use Adam \citep{kingma2014adam} to train the network weights, along with a training batch size of 50 and learning rate of $5\times 10^{-4}$. We furthermore reserve 10\% of the data for validation, and stop training when the validation error does not improve over 20 epochs to avoid overfitting. For these density ratio estimators, we use the \texttt{sbi} python package \citep{sbi}.

\subsection{Sampling with Metropolis-Hastings}

Unless stated otherwise, we obtain samples from both the approximate ground-truth posteriors and the posterior distributions estimated with density ratios with Metropolis-Hastings Markov chain Monte Carlo. We use a normal proposal distribution $q(\tilde{\bth} \mid \boldsymbol{\theta}) = \mathcal{N}\left(\boldsymbol{\theta}, \ell^2\Sigma\right)$ with covariance matrix $\Sigma$, which estimate by performing a trial run of 50,000 steps with a diagonal proposal covariance matrix \citep[see e.g. the guidelines in][Section 12.2]{gelman2013bayesian} and setting $\ell = 2/\sqrt{d}$ for $\boldsymbol{\theta}\in \mathbb{R}^d$ \citep{roberts1997weak}. This works well if the posterior is approximately normal \citep[see][]{schmon2021optimal}. Once $\Sigma$ is estimated, we run one further chain for 100,000 steps, and thin by retaining every 100th sample. We furthermore start every chain from the true parameter values $\boldsymbol{\theta}^{*}$.

\subsection{Confidence interval evaluations}

In Figures 3-6 in the main text, the 95\% confidence intervals are bootstrap confidence intervals obtained by running the training procedures at different seeds and subsequently applying the trained ratio estimators to the task of obtaining the posterior for the same pseudo-observed data in each case.

\vfill

\end{document}